\documentclass{article}
\usepackage{hyphenat}
\usepackage[preprint,nonatbib]{neurips_2025}

\usepackage[utf8]{inputenc}
\usepackage[T1]{fontenc}
\usepackage[numbers,sort&compress]{natbib}
\usepackage{amsmath,amssymb,amsthm,mathtools}
\usepackage{graphicx}
\usepackage{booktabs}
\usepackage{microtype}
\usepackage{xcolor}
\usepackage{url}
\definecolor{NavyBlue}{HTML}{1A4E8A}
\usepackage[colorlinks=true,linkcolor=NavyBlue,citecolor=NavyBlue,
            urlcolor=NavyBlue,breaklinks=true]{hyperref}
\hypersetup{
  pdftitle={The Distributional View of Knowledge Distillation},
  pdfauthor={Gordei Verbii, Juho Lee},
  pdfkeywords={knowledge distillation, optimal transport, Wasserstein
               barycenter, Sinkhorn divergence, language models}}

\theoremstyle{plain}
\newtheorem{proposition}{Proposition}
\theoremstyle{definition}
\newtheorem{hypothesis}{Hypothesis}
\theoremstyle{remark}
\newtheorem{remark}{Remark}

\newcommand{\Vv}{\mathcal V}
\newcommand{\Ss}{\mathcal S}
\newcommand{\OT}{\mathrm{OT}}
\newcommand{\KL}{\mathrm{KL}}
\newcommand{\JS}{\mathrm{JS}}
\newcommand{\PPL}{\mathrm{PPL}}
\newcommand{\Deff}{D_{\mathrm{eff}}}
\newcommand{\eps}{\varepsilon}
\DeclareMathOperator*{\argmin}{arg\,min}
\DeclareMathOperator*{\argmax}{arg\,max}
\newcommand{\BaryOT}{\mbox{BaryOT}}
\newcommand{\sd}[1]{\,{\scriptsize$\pm$#1}}

\title{The Distributional View of Knowledge Distillation}

\author{%
  Gordei Verbii\\
  Independent Researcher\\
  \texttt{scigverbii@gmail.com}
  \And
  Juho Lee\\
  KAIST\\
  \texttt{juholee@kaist.ac.kr}
}

\begin{document}
\maketitle

\begin{abstract}
Token-level knowledge distillation (KD) matches two conditional distributions per position, yet the standard objectives compare them \emph{pointwise}: a Kullback-Leibler gradient is blind to \emph{which} wrong token receives probability mass. We develop a distributional view in which the teacher is represented not by a single softened output but by a family of \emph{multi-temperature views} -- marginals of the annealing path of its logits -- and the student is trained against a geometry-aware aggregate of these views under an embedding-based ground cost. We formalize the resulting design space (mixtures, log-linear pooling, entropic Wasserstein barycenters, and a debiased Sinkhorn-divergence flagship in \emph{hub} and \emph{path} forms), prove an exact collapse result showing log-linear pooling of tempered views is equivalent to a single temperature, and give a multi-marginal Schr\"odinger-bridge reading that yields falsifiable predictions. On instruction-tuned Pythia pairs, experiments yield three empirical laws: (i)~a \emph{dispersion law} -- the benefit of multi-temperature aggregation grows monotonically with the effective temperature dispersion of the views, not with their number; (ii)~\emph{dispersed views unlock the aggregation operator} -- the barycenter separates from the arithmetic mixture exactly when transport-based aggregation starts to beat averaging; and (iii)~a \emph{two-regime picture} governed by the ceiling gap $\Gamma=\PPL_{\mathrm{SFT}}-\PPL_T$: when the fine-tuned teacher barely beats a supervised student the gentle transport objective is the best KD loss but no KD beats supervised fine-tuning, whereas at a real ceiling the ranking inverts -- and the sign of the fidelity-generalization correlation flips. We argue that ``which distillation loss is the best'' is not a fixed property of the loss but a function of $\Gamma$.
\end{abstract}

\section{Introduction}
\label{sec:intro}

Knowledge distillation \citep{hinton2015distilling} trains a small
student on the outputs of a larger teacher. At every answer position $t$
both models define a categorical distribution over the vocabulary, and
the classical objective adds to the ground-truth cross-entropy a
divergence between the two softened distributions. Two structural
choices are hidden in this template. First, the teacher is summarized by
a \emph{single} temperature, although its logits define an entire curve
of distributions -- from uniform to greedy -- whose different points
expose different information (tails vs.\ modes). Second, the divergence
is \emph{pointwise}: the forward-KL gradient in the student logits is
$\tau(p^S-p^T)$, a coordinate-wise mass comparison that charges the same
price for putting mass on a synonym as on an unrelated token. Optimal
transport (OT) offers the complementary geometry: under a ground cost
derived from token embeddings, near-misses are cheap and semantic
outliers are expensive.

This paper develops and stress-tests the resulting \emph{distributional
view of KD}: represent the teacher by $K$ tempered views
$\mu_k=\sigma_{\tau_k}(z^T)$, aggregate them in the transport geometry
(a multi-temperature Wasserstein barycenter, or a Sinkhorn-divergence
objective whose optimum is that barycenter), and train the student
against the aggregate. Our contributions are:

\textbf{(1) A formal framework}
(\S\ref{sec:prelim}-\S\ref{sec:theory}) with a single notation system:
tempered views as a one-parameter exponential family; the divergence
family and its gradients; entropic OT and the debiased Sinkhorn
divergence; and the trichotomy of means. We prove that log-linear
pooling of tempered views of one teacher \emph{collapses} to a single
temperature (Prop.~\ref{prop:collapse}), so any multi-view benefit must
come from mixtures or transport barycenters, and we give a
multi-marginal Schr\"odinger-bridge (MMSB) reading
\citep{chen2019multimarginal} that predicts the operative variable of a
view set is its \emph{effective dispersion}, not its cardinality.

\textbf{(2) A controlled eight-method study}
(\S\ref{sec:setup}-\S\ref{sec:results}) on instruction-tuned Pythia
pairs \citep{biderman2023pythia} over Dolly-15k
\citep{conover2023dolly}: supervised fine-tuning (SFT), forward/
reverse/Jensen-Shannon KL, an arithmetic-mixture target, an entropic
barycenter target, a barycenter gradient-descent variant, and the
flagship \BaryOT{} (debiased Sinkhorn divergence
\citep{feydy2019interpolating} to the views, in \emph{hub} and
\emph{path} forms), all sharing one training loop, $\alpha$, data and
seeds. Methodologically we introduce a \emph{margin-aware
teacher-strength guard} and a \emph{zero-shot capacity probe} that
diagnose whether a KD experiment can demonstrate gains at all -- an
issue we found decisive and under-reported.

\textbf{(3) Three empirical laws with mechanism tests.} A monotone
\emph{dispersion law} for target-side aggregation, confirmed by a
controlled $\Delta$-sweep and a weights-vs-count probe; the
\emph{views-unlock-the-operator} effect, in which the
barycenter-mixture $\ell_1$ gap and the SinkhornBary-vs-Mixture
performance gap grow together; and a \emph{hub-vs-path} decomposition
showing that in the flagship the view-set sensitivity is a softening
artifact (path mode erases it and improves both settings).

\textbf{(4) The two-regime picture.} Across six runs and four settings,
the ceiling gap $\Gamma$ separates a thin-ceiling regime -- where no KD
beats SFT, the OT flagship minimizes the ``KD tax'', and fidelity
\emph{anti}-correlates with generalization, consistent with
\citet{stanton2021does} -- from a real-ceiling regime where the ranking
inverts, JS distillation nearly matches the teacher, and fidelity
correlates \emph{positively} with generalization. The crossover is
bracketed, $\Gamma^\star\in(0.10,1.49)$, and we state it as a
falsifiable hypothesis with concrete tests.

We report negative results with the same prominence as positive ones:
at thin ceilings every KD method pays a tax over SFT, and debiasing the
Sinkhorn divergence changes nothing measurable in this regime. We
believe this honesty is the paper's point: the distributional view is
valuable, but \emph{when} each of its ingredients pays depends on a
measurable property of the experiment.

\section{Related work}
\label{sec:related}

\textbf{Knowledge distillation.} The softened-softmax objective is due
to \citet{hinton2015distilling}. \citet{stanton2021does} showed that
student-teacher fidelity and student generalization can dissociate -- %
more faithful students do not always generalize better; our
Regime~I reproduces this inversion and Regime~II shows its sign flip.
For autoregressive language models, MiniLLM \citep{gu2024minillm}
argues for reverse KL with policy-style training and GKD
\citep{agarwal2024gkd} for on-policy data with generalized
JS objectives; both concern \emph{which} distributions are compared,
while we vary \emph{how} they are compared and aggregated. Our
off-policy reverse-KL baseline behaves exactly as this literature
predicts (weakest of the KL family, \S\ref{sec:results}).

\textbf{Computational optimal transport.} Entropic regularization and
the Sinkhorn algorithm are due to \citet{cuturi2013sinkhorn}; the
debiased Sinkhorn divergence, with positivity, convexity and
metrization of convergence in law, to \citet{feydy2019interpolating}.
Wasserstein barycenters were introduced by \citet{agueh2011barycenters};
fast computation via iterative Bregman projections by
\citet{benamou2015bregman} (the routine we use) and
\citet{cuturi2014fast}; the entropic bias of barycenters and its
correction by \citet{janati2020debiased}. The dynamical picture we
invoke originates in \citet{benamou2000cfd}; the connection between
entropic OT and Schr\"odinger bridges is surveyed from the
stochastic-control viewpoint by \citet{chen2016relation}, and the
multi-marginal bridge problem -- the closest formal object to our
multi-temperature constraint set -- by \citet{chen2019multimarginal}.

\textbf{Ensembles and diversity.} Distilling \emph{diverse} sources is
known to matter when the sources are genuinely different models
\citep{nam2021diversity}. Our views are tempered images of \emph{one}
model, which is exactly why Prop.~\ref{prop:collapse} is needed: some
aggregation operators cannot create diversity benefits from a single
teacher even in principle.

\section{Notation and preliminaries}
\label{sec:prelim}

\paragraph{Objects.} A vocabulary $\Vv$, $V=|\Vv|$, with token
embeddings $e_v\in\mathbb R^{d}$; prompt-answer pairs $x$ with answer
positions $M(x)$ (the loss mask); at $t\in M(x)$ teacher and student
logits $z^T_t,z^S_t\in\mathbb R^{V}$. The tempered softmax at
temperature $\tau>0$, with inverse temperature $\beta=1/\tau$, is
\begin{equation}
\sigma_\tau(z)_v=\frac{\exp(z_v/\tau)}{\sum_{w\in\Vv}\exp(z_w/\tau)}
\qquad\Longleftrightarrow\qquad
p_\beta\propto e^{\beta z}.
\label{eq:family}
\end{equation}
The right-hand form is a one-parameter exponential family in $\beta$: a
smooth curve in the simplex running from the uniform prior
$p_{\beta\to0}=\mathrm{Unif}(\Vv)$ to the greedy vertex
$p_{\beta\to\infty}=\delta_{\argmax z}$, with score identity
$\tfrac{d}{d\beta}\log p_\beta(v)=z_v-\mathbb E_{p_\beta}[z]$, so
raising $\beta$ monotonically sharpens the distribution toward the mode.
We call $\{p_\beta\}_{\beta>0}$ the \emph{annealing path} of the logits.

\paragraph{The KD template.} Every method in this paper is an instance
of
\begin{equation}
\mathcal L(\theta_S)=(1-\alpha)\,\mathcal L_{\mathrm{CE}}
+\alpha\,\frac{1}{|M|}\sum_{t\in M}
\mathbf D\big(\text{teacher}_t,\text{student}_t\big),
\qquad \alpha=\tfrac12,
\label{eq:template}
\end{equation}
where the first term is the masked ground-truth negative log-likelihood
and the second is a divergence between the two conditionals at position
$t$. The methods differ only in $\mathbf D$ and in which functionals of
the annealing path \eqref{eq:family} it compares.

\paragraph{The divergence family.} With $p^T=\sigma_\tau(z^T_t)$ and
$p^S=\sigma_\tau(z^S_t)$ at a shared $\tau=\tau_{\mathrm{KD}}$, and
$m=\tfrac12(p^T+p^S)$,
\begin{equation}
\KL(p^T\Vert p^S)=\sum_v p^T_v\log\frac{p^T_v}{p^S_v},
\qquad
\KL(p^S\Vert p^T),
\qquad
\JS=\tfrac12\KL(p^T\Vert m)+\tfrac12\KL(p^S\Vert m).
\label{eq:divfamily}
\end{equation}
The forward direction is \emph{mode-covering}: it takes expectations
under $p^T$, so unmatched teacher support costs $+\infty$ and the
student must cover the tails. The reverse direction is
\emph{mode-seeking}: it penalizes only where the student puts mass and
may drop teacher modes. The Jensen-Shannon divergence is bounded and
symmetric. With the conventional $\tau^2$ scaling, the forward gradient
is the pointwise
\begin{equation}
\nabla_{z^S}\,\tau^2\,\KL(p^T\Vert p^S)=\tau\,(p^S-p^T),
\label{eq:klgrad}
\end{equation}
which is exactly the blindness the transport machinery repairs: it
charges the same for misplacing mass on a synonym as on an arbitrary
token.

\paragraph{Support restriction.} All geometric methods act on the
shared support $\Ss_t=\text{top-}k(\bar p_t)\subset\Vv$ of the
position-averaged teacher probabilities and \emph{condition} each
distribution on it, $\tilde p=p|_{\Ss_t}/p(\Ss_t)$. All divergences
below are therefore divergences between conditional laws on $\Ss_t$; the
gain is a $k\times k$ instead of $V\times V$ cost matrix, and ablations
over $k\in\{64,256\}$ show no benefit from the larger support here
($11.40$ vs.\ $11.46$; Table~\ref{tab:abl}).

\section{The BaryKD family}
\label{sec:methods}

\paragraph{Ground cost.} On $\Ss_t$ we measure semantic dissimilarity by
normalized embedding cosine distance,
\begin{equation}
C_{ij}=\frac{1-\cos(e_i,e_j)}
            {\max_{i',j'}\big(1-\cos(e_{i'},e_{j'})\big)}\in[0,1],
\qquad C=C^\top,\quad C_{ii}=0.
\label{eq:cost}
\end{equation}
$C$ is symmetric, bounded and vanishes on the diagonal, which is all
the theory below requires; we do not claim the triangle inequality.

\paragraph{Entropic OT and the debiased Sinkhorn divergence.} For
$a,b\in\Delta_{\Ss_t}$,
\begin{equation}
\OT_\eps(a,b)=\min_{P\in U(a,b)}\ \langle P,C\rangle
+\eps\,\KL\!\big(P\,\Vert\,ab^\top\big),
\qquad
U(a,b)=\{P\ge0:P\mathbf 1=a,\ P^\top\mathbf1=b\},
\label{eq:eot}
\end{equation}
where $P_{ij}$ is the mass moved from $i$ to $j$ and $C_{ij}$ its unit
price; the entropic term is strictly convex and makes the problem
GPU-fast, solved by Sinkhorn iterations \citep{cuturi2013sinkhorn}. The
smoothing induces the \emph{entropic blur}: $\OT_\eps(a,a)>0$ and the
minimizer of $\OT_\eps(a,\cdot)$ is a smoothed $a$. The Sinkhorn
divergence \citep{feydy2019interpolating} subtracts the two self-blur
terms,
\begin{equation}
S_\eps(a,b)=\OT_\eps(a,b)-\tfrac12\OT_\eps(a,a)-\tfrac12\OT_\eps(b,b),
\label{eq:sdiv}
\end{equation}
and is nonnegative, convex, vanishes iff $a=b$, and metrizes
convergence in law. In our objectives $a$ is the student and $b$ the
teacher, so $\OT_\eps(b,b)$ is constant in the optimization and is
omitted from the computational graph.

\paragraph{Views and aggregation targets.} The teacher's
multi-temperature views are
$\mu_k=\widetilde{\sigma_{\tau_k}(z^T_t)}$, $k=1,\dots,K$, with weights
$\lambda\in\Delta_K$, uniform unless stated. Two view sets recur: the
\emph{endpoints} $\{0.5,1.5\}$ ($K{=}2$, the default) and the
\emph{interior} set $\{0.7,1.0,1.3\}$ ($K{=}3$). The methods:
\begin{itemize}\itemsep2pt
\item \textbf{SFT}: pure cross-entropy at equal compute (2 epochs),
i.e.\ $\alpha=0$ in \eqref{eq:template}.
\item \textbf{VanillaKD / ReverseKD / JSKD}: $\mathbf D$ from
\eqref{eq:divfamily} at $\tau_{\mathrm{KD}}=2$.
\item \textbf{MixtureKD}: forward KL to the arithmetic mixture
$\bar\mu=\sum_k\lambda_k\mu_k$.
\item \textbf{SinkhornBary}: forward KL to the precomputed entropic
barycenter, the Fr\'echet mean of the views in the transport geometry,
\begin{equation}
b^\star=\argmin_{\nu\in\Delta_{\Ss_t}}\ \sum_{k}\lambda_k\,
\OT_\eps(\mu_k,\nu),
\label{eq:bary}
\end{equation}
computed by iterative Bregman projections \citep{benamou2015bregman};
existence and uniqueness of the unregularized $W_2$ barycenter under
absolute continuity are due to \citet{agueh2011barycenters}, and the
entropic bias of \eqref{eq:bary} and its correction to
\citet{janati2020debiased}.
\item \textbf{BARYGD}: an unrolled gradient-descent variant of
\eqref{eq:bary} (15 steps), retained as a stress test of the
fixed-point solver.
\item \textbf{\BaryOT{} (flagship)}: no precomputed target; the student is
pulled to the views directly through $S_\eps$, in one of two forms
(\S\ref{sec:theory}).
\end{itemize}
Shared hyperparameters: $\eps=0.05$ on the $[0,1]$-bounded cost,
$k=256$, $\tau_{\mathrm{KD}}=2$, $\alpha=\tfrac12$; $200$ Bregman
iterations in the barycenter solver \eqref{eq:bary} and $50$ Sinkhorn
iterations inside each $S_\eps$ evaluation \eqref{eq:sdiv}.

\paragraph{Implementation note.} The KL-family losses
\eqref{eq:divfamily} are evaluated on the full vocabulary at every
masked position and carry the conventional $\tau^2$ scaling of
\eqref{eq:klgrad}. The four geometric methods are evaluated on the
top-$k$ support at $4$ positions sampled uniformly per sequence -- a
Monte-Carlo estimator of the position average in \eqref{eq:template},
used because each position costs a Sinkhorn solve -- and carry no
$\tau^2$ factor. All methods otherwise share the training loop, data,
$\alpha$ and seeds.

\section{Theory: three means, two objectives, one bridge}
\label{sec:theory}

\paragraph{The trichotomy of means.} Given tempered views of \emph{one}
teacher, there are three natural averages: log-linear (geometric),
arithmetic (mixture), and transport (barycenter). The first collapses:

\begin{proposition}[Geometric collapse]
\label{prop:collapse}
Let $\mu_k=p_{\beta_k}\propto e^{\beta_k z}$ as in \eqref{eq:family} and
$\lambda\in\Delta_K$. Then the normalized geometric mean is again a
tempered view:
\[
\frac{\prod_k\mu_k^{\lambda_k}}{\big\lVert\prod_k\mu_k^{\lambda_k}\big\rVert_1}
= p_{\bar\beta},
\qquad \bar\beta=\textstyle\sum_k\lambda_k\beta_k .
\]
\end{proposition}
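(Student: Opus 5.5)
The plan is a direct computation in the exponential-family parametrization \eqref{eq:family}. First I write each view explicitly as $\mu_k(v)=e^{\beta_k z_v}/Z(\beta_k)$ with partition function $Z(\beta)=\sum_{w\in\Vv}e^{\beta z_w}$, which is finite and strictly positive because $\Vv$ is finite. Raising to the power $\lambda_k$ and taking the product over $k$ gives, coordinatewise,
\[
\prod_k\mu_k(v)^{\lambda_k}
=\frac{\exp\big(\sum_k\lambda_k\beta_k z_v\big)}{\prod_k Z(\beta_k)^{\lambda_k}}
=\frac{e^{\bar\beta z_v}}{\prod_k Z(\beta_k)^{\lambda_k}},
\]
using only $e^{a}e^{b}=e^{a+b}$ and the definition $\bar\beta=\sum_k\lambda_k\beta_k$.

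The second step is to observe that the denominator $c:=\prod_k Z(\beta_k)^{\lambda_k}$ does not depend on $v$. The unnormalized geometric mean is therefore $c^{-1}e^{\bar\beta z}$, and all its entries are positive. Its $\ell_1$ norm is then $c^{-1}Z(\bar\beta)$. Dividing, the constant $c$ cancels and we obtain $e^{\bar\beta z_v}/Z(\bar\beta)=p_{\bar\beta}(v)$. Since $\lambda\in\Delta_K$ and every $\beta_k>0$, we have $\bar\beta>0$, so $p_{\bar\beta}$ is a legitimate member of the annealing path and the statement is meaningful.

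There is essentially no obstacle; the only points needing care are bookkeeping. Positivity of all entries justifies writing the $\ell_1$ norm as a plain sum and ensures the normalization is well defined. The convexity of $\lambda$ keeps $\bar\beta$ inside the admissible range $\beta>0$.

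I would also remark that the same computation goes through verbatim for the support-conditioned views $\tilde p$ on $\Ss_t$ used in \S\ref{sec:methods}. Conditioning on $\Ss_t$ just replaces $\Vv$ by $\Ss_t$ in $Z$, so the collapse also holds for the restricted views actually used by the geometric methods.
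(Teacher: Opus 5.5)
Your proof is correct and follows the same route as the paper: each view is $e^{\beta_k z}$ up to a $v$-independent normalizer, the exponents combine to $e^{\bar\beta z}$, and renormalizing gives $p_{\bar\beta}$. Your extra bookkeeping is accurate and only makes explicit what the paper's one-line proportionality argument leaves implicit: the explicit partition functions, $\bar\beta>0$ from $\lambda\in\Delta_K$, and the observation that the identity persists for views conditioned on the shared support $\Ss_t$.
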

\begin{proof}
$\prod_k\mu_k^{\lambda_k}(v)\propto\prod_k e^{\lambda_k\beta_k z_v}
= e^{(\sum_k\lambda_k\beta_k)z_v}$; normalizing yields
$p_{\bar\beta}$.
\end{proof}

\noindent Hence log-linear pooling of tempered views is
\emph{equivalent to choosing a single temperature}: it cannot be the
source of a multi-view benefit, and any observed benefit must be
attributed to the mixture or the barycenter.

\begin{remark}[The mixture leaves the family]
\label{rem:mixture}
$\bar\mu=\sum_k\lambda_k\mu_k$ is in general \emph{not} of the form
$e^{\beta z}/Z$: the curve $\beta\mapsto p_\beta$ is not a line segment
in the simplex whenever $z$ takes at least three distinct values, so a
convex combination of two distinct points on it lies off the curve. The
mixture preserves the $\argmax$ but thickens tails -- it adds mass
without consulting the token geometry. The barycenter \eqref{eq:bary}
is the third mean: it averages \emph{in the geometry induced by} $C$.
\end{remark}

\paragraph{Hub vs.\ path objectives.} With student views
$p^S_\tau=\widetilde{\sigma_\tau(z^S_t)}$, the flagship admits two
forms:
\begin{equation}
\mathcal L_{\mathrm{hub}}
=\sum_k\lambda_k\,S_\eps\big(p^S_{\tau_{\mathrm{KD}}},\mu_k\big),
\qquad\qquad
\mathcal L_{\mathrm{path}}
=\sum_k\lambda_k\,S_\eps\big(p^S_{\tau_k},\mu_k\big).
\label{eq:hubpath}
\end{equation}
In \emph{hub} mode one student view meets all teacher views, and
first-order optimality places $p^S_{\tau_{\mathrm{KD}}}$ at the
$S_\eps$-barycenter of $\{\mu_k\}$: the student is an \emph{implicit
barycenter variable} and nothing is precomputed. In \emph{path} mode
the student's own tempered curve must track the teacher's at each
sampled $\beta_k$ -- a discrete two-curve matching, coupled through the
shared logits $z^S_t$. Hub entangles two asymmetries, the comparison
geometry \emph{and} a fixed softening $\tau_{\mathrm{KD}}$ applied to
one side only; path removes the second. This yields a clean mechanism
test (\S\ref{sec:results}): if a view-set effect in hub mode disappears
in path mode, it was a softening artifact, not geometry.

\paragraph{The multi-marginal bridge reading (motivating, not
literal).} The multi-marginal Schr\"odinger bridge problem
\citep{chen2019multimarginal,chen2016relation} seeks the path law
$\mathbb P$ closest to a reference law $\mathbb R$ subject to
prescribed marginals at multiple times:
\begin{equation}
\min_{\mathbb P}\ \KL\big(\mathbb P\,\Vert\,\mathbb R\big)
\quad\text{subject to}\quad
\mathbb P_{t_k}=\rho_{t_k},\quad k=1,\dots,K.
\label{eq:mmsb}
\end{equation}
Reading inverse temperature as time ($t\leftrightarrow\beta$), the
teacher's annealing path supplies the marginals and the student is the
steered object. We use \eqref{eq:mmsb} only as a surrogate motivation
and derive from it two \emph{falsifiable} predictions: (i) endpoint
marginals are the binding constraints of a bridge, so interior views
should contribute little beyond re-weighting the objective toward the
middle; (ii) the operative variable of a view set
$(\lambda,\tau)$ is its \emph{effective dispersion}
\begin{equation}
\Deff(\lambda,\tau)=\sum_k\lambda_k\,\lvert\tau_k-1\rvert ,
\label{eq:deff}
\end{equation}
not the number of views $K$.

\paragraph{The ceiling gap and the two regimes.} Define
$\Gamma:=\PPL_{\mathrm{SFT}}-\PPL_T$ on held-out data, positive iff the
fine-tuned teacher is a real ceiling for a supervised student at equal
compute. We will show the data separate a \emph{thin-ceiling} regime
($\Gamma\lesssim0.1$) from a \emph{real-ceiling} regime
($\Gamma\approx1.5$) with inverted method rankings, and state:

\begin{hypothesis}[Crossover]
\label{hyp:crossover}
There exists $\Gamma^\star$ such that for $\Gamma<\Gamma^\star$ no KD
method beats SFT and the gentle transport objective minimizes the KD
tax, while for $\Gamma>\Gamma^\star$ faithful KL-family KD beats SFT
and the ranking inverts. Our runs bracket
$\Gamma^\star\in(0.10,\,1.49)$.
\end{hypothesis}

\section{Experimental setup}
\label{sec:setup}

\textbf{Models and data.} Pythia pairs \citep{biderman2023pythia}:
\emph{small} $160\text{M}\to31\text{M}$ and \emph{big}
$410\text{M}\to70\text{M}$ ($5.3\times$ and $5.8\times$
compression by parameter count), all in fp32. Dolly-15k \citep{conover2023dolly} with the
\texttt{general\_qa} category held out entirely; 3{,}000 training
examples, 200 held-out evaluation examples, max length 256. The teacher
is fine-tuned 3 epochs (AdamW, $5\cdot10^{-5}$); students train 2
epochs with identical loops, $\alpha=\tfrac12$, 5 seeds.

\textbf{Metrics.} Held-out perplexity
$\PPL=\exp\big(\frac1{|M|}\sum_{t\in M}-\log p^S_1(y_t)\big)$
measures generalization, where $p^S_1$ denotes the student's
\emph{untempered} ($\tau=1$) conditional. Fidelity is measured by
$\mathrm{agree@1}$, the teacher-student top-1 agreement rate, and
$\mathrm{predKL}=\frac1{|M|}\sum_t\KL(p^T_1\Vert p^S_1)$, both after
\citet{stanton2021does}; calibration by 15-bin expected calibration
error (ECE) \citep{guo2017calibration}. We report seed-paired
$t$-tests, Cohen's $d$, and bootstrap 95\% confidence intervals.

\textbf{Guard and probe.} Before any KD run we report (i) the
\emph{zero-shot capacity probe} -- held-out PPL of the \emph{pretrained}
teacher and student, diagnosing whether capacity is the lever at
all -- and (ii) the \emph{margin-aware guard}, which prints $\Gamma$
with levels \textsc{fail}/\textsc{thin}/\textsc{ok}. On the small pair
the pretrained gap is huge ($328.6$ vs $712.6$) yet 3k-shot fine-tuning
compresses it to $\Gamma=+0.10$ (\textsc{thin}): the task
\emph{saturates}. On the big pair $\Gamma=+1.49$ (\textsc{ok}). This
diagnostic distinction turned out to organize every downstream result.

\textbf{Settings.} Across the project the same pipeline was executed
under four teacher-strength settings, referred to throughout -- and used
as the point labels of Fig.~\ref{fig:meta} -- by what distinguishes
them: \emph{under-trained teacher} (one fine-tuning epoch, 1k training
examples, fp16 evaluation); \emph{longer teacher FT} (three epochs,
otherwise identical); \emph{saturated task} (three epochs, 3{,}000
examples, fp32 evaluation -- executed three times: two independent
replications, and a third whose flagship uses the endpoint views); and
\emph{capacity ceiling} (the $410\text{M}\to70\text{M}$ pair of
Table~\ref{tab:main}). The first two are retained as \emph{diagnostic}
settings: they document how an apparently reasonable KD setup silently
places the teacher below the student -- a failure no method-level
comparison can survive.

\section{Results}
\label{sec:results}

\begin{table}[t]
\caption{\textbf{Main results} (mean $\pm$ s.d.\ over 5 seeds; held-out
\texttt{general\_qa}). Left block: thin ceiling ($\Gamma=+0.10$);
right block: real ceiling ($\Gamma=+1.49$). \textbf{Bold} = best
\emph{KD} method within a block and column; \underline{underline} =
beats SFT on PPL ($p<.05$, seed-paired). SFT retains the best ECE in
both blocks; the bold ECE is the best among KD methods. The flagship
row uses the endpoint views $\{0.5,1.5\}$ in hub mode; both choices are
varied in Table~\ref{tab:abl}.}
\label{tab:main}
\centering
\small
\setlength{\tabcolsep}{2.5pt}
\begin{tabular}{@{}l cccc c cccc@{}}
\toprule
& \multicolumn{4}{c}{\textbf{small}\, $160\text{M}\!\to\!31\text{M}$, $\Gamma{=}{+}0.10$} &
& \multicolumn{4}{c}{\textbf{big}\, $410\text{M}\!\to\!70\text{M}$, $\Gamma{=}{+}1.49$}\\
\cmidrule(r){2-5}\cmidrule(l){7-10}
method & PPL\,$\downarrow$ & agree@1\,$\uparrow$ & predKL\,$\downarrow$ & ECE\,$\downarrow$ &
       & PPL\,$\downarrow$ & agree@1\,$\uparrow$ & predKL\,$\downarrow$ & ECE\,$\downarrow$ \\
\midrule
\multicolumn{10}{@{}l}{\emph{Reference models}}\\
Teacher (3-ep.\ FT) & 10.664 & -- & -- & .123 & & 9.602 & -- & -- & .195\\
SFT (no KD)          & 10.762\sd{.017} & .528 & 1.158 & .060 &
                     & 11.089\sd{.357} & .468 & 1.587 & .068\\
\addlinespace
\multicolumn{10}{@{}l}{\emph{Pointwise divergences: the KL family}}\\
VanillaKD & 11.923\sd{.024} & \textbf{.550} & \textbf{0.984} & .087 &
          & \underline{10.588}\sd{.049} & \textbf{.488} & 1.429 & .153\\
ReverseKD & 12.422\sd{.019} & .541 & 1.044 & .095 &
          & 11.158\sd{.134} & .484 & 1.651 & .135\\
JSKD      & 11.535\sd{.026} & .547 & 1.018 & \textbf{.071} &
          & \underline{\textbf{9.888}}\sd{.027} & .484 & \textbf{1.428} & \textbf{.094}\\
\addlinespace
\multicolumn{10}{@{}l}{\emph{Aggregated targets}}\\
MixtureKD    & 12.719\sd{.066} & .510 & 1.411 & .177 &
             & 13.781\sd{.097} & .427 & 2.007 & .171\\
SinkhornBary & 11.424\sd{.023} & .520 & 1.213 & .107 &
             & 12.787\sd{.077} & .431 & 1.881 & .136\\
BARYGD       & 13.112\sd{.109} & .505 & 1.466 & .174 &
             & 14.182\sd{.143} & .418 & 2.038 & .153\\
\addlinespace
\multicolumn{10}{@{}l}{\emph{Flagship: Sinkhorn divergence to the views}}\\
\BaryOT{} (hub) & \textbf{11.117}\sd{.023} & .530 & 1.200 & .101 &
             & 11.548\sd{.097} & .462 & 1.682 & .111\\
\bottomrule
\end{tabular}
\end{table}

\begin{table}[t]
\caption{\textbf{Ablations} (small pair, $\Gamma=+0.10$; held-out PPL).
The view-set column uses SinkhornBary as the sensitive geometric probe
and $\Deff$ is the effective dispersion \eqref{eq:deff}; the flagship
column varies \BaryOT{}. Bold marks the best setting within each
ablation. Seed counts differ by block -- 3 for the dispersion sweep, 2
for the $K{=}3$ probes, the flagship variants and the debiasing pair, 5
for the reference rows and for endpoints-hub -- which is why these
numbers carry small offsets from the 5-seed Table~\ref{tab:main}.
Uniform $K{=}3$ sits $\approx0.8$ PPL \emph{above} the $\Deff$ curve
traced by the dispersion sweep (Fig.~\ref{fig:dispersion}a), whereas
down-weighting the midpoint lands on it.}
\label{tab:abl}
\centering
\small
\setlength{\tabcolsep}{4pt}
\begin{tabular}{@{}lcc@{\hspace{2.6em}}lc@{}}
\toprule
\multicolumn{3}{@{}l}{\textbf{View set} (target-side aggregation)} &
\multicolumn{2}{l}{\textbf{Flagship objective and implementation}}\\
\cmidrule(r){1-3}\cmidrule(l){4-5}
setting & $\Deff$ & PPL\,$\downarrow$ & setting & PPL\,$\downarrow$\\
\midrule
\multicolumn{3}{@{}l}{\emph{Dispersion sweep} ($\tau=1\pm\Delta$; $\Delta{=}0$ is one view)} &
\multicolumn{2}{l}{\emph{Views $\times$ student-view mode}}\\
$\Delta{=}0$   & 0    & 13.08\sd{.06} & interior-hub   & 11.28\\
$\Delta{=}0.1$ & 0.10 & 12.54\sd{.04} & endpoints-hub  & 11.12\\
$\Delta{=}0.3$ & 0.30 & 11.90\sd{.03} & interior-path  & \textbf{10.88}\\
$\Delta{=}0.5$ & 0.50 & \textbf{11.47}\sd{.01} & endpoints-path & 10.89\\
\addlinespace
\multicolumn{3}{@{}l}{\emph{Weights vs.\ count} ($K{=}3$ on $\{0.5,1,1.5\}$)} &
\multicolumn{2}{l}{\emph{Support size}}\\
uniform                   & 0.33 & 12.62\sd{.07} & top-$k$ $=64$  & \textbf{11.40}\\
midpoint $\lambda{=}0.10$ & 0.45 & \textbf{11.54} & top-$k$ $=256$ & 11.46\\
\addlinespace
\multicolumn{3}{@{}l}{\emph{Reference}} &
\multicolumn{2}{l}{\emph{Sinkhorn debiasing}}\\
SFT (no KD)         & -- & 10.76\sd{.02} & debiased & \textbf{11.13}\\
Teacher (3-ep.\ FT) & -- & 10.66         & biased   & 11.14\\
\bottomrule
\end{tabular}
\end{table}

\begin{figure}[t]
\centering
\includegraphics[width=\linewidth]{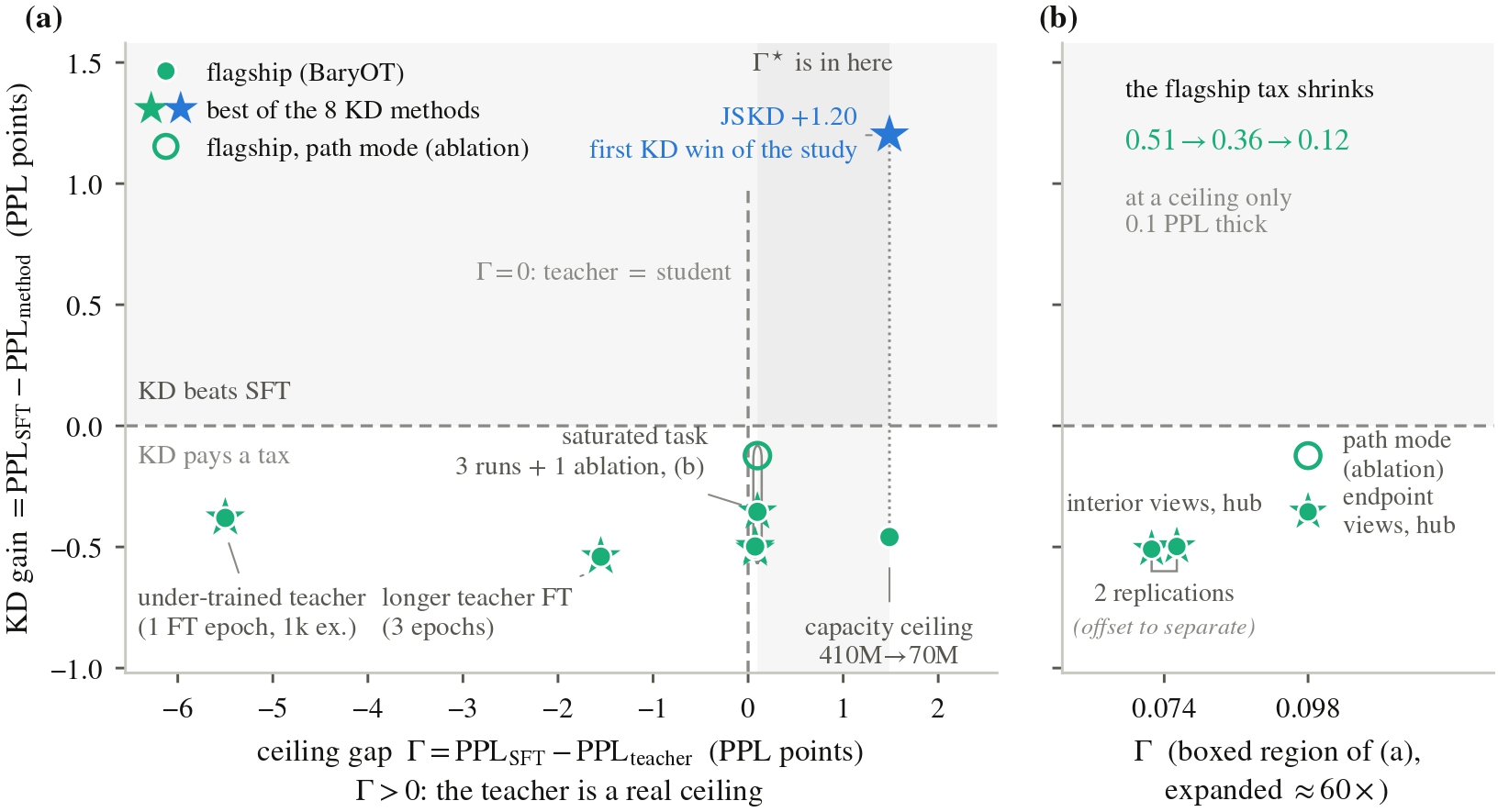}
\caption{\textbf{One point per run, grouped by experimental setting:
what varies along the $x$-axis is the teacher, not the method.} All settings share the
dataset, the training loop and the eight methods of
Table~\ref{tab:main}; they differ in how strong the fine-tuned teacher
ends up relative to an SFT student (the ceiling gap $\Gamma$) and,
within the saturated settings, in the flagship's view configuration.
Left to right: an \emph{under-trained teacher} (one fine-tuning epoch
on 1k examples, fp16 evaluation) ends up $5.5$ PPL \emph{worse} than
its own student, so every faithful KD transfers weakness; \emph{longer teacher
fine-tuning} (three epochs, same data) closes most of the deficit but
the teacher still trails; the \emph{saturated-task} setting (three
epochs, 3k examples, fp32 evaluation) brings the two within a tenth of
a perplexity point ($\Gamma=0.07$-$0.10$) -- the pretrained capacity gap
is $+384$ PPL, but 3k-shot fine-tuning saturates the task -- shown as
two independent replications plus a third run whose flagship uses the
endpoint views $\{0.5,1.5\}$; the \emph{capacity-ceiling} setting
swaps in the $410\text{M}\to70\text{M}$ pair, which preserves
$\Gamma=+1.49$ after identical fine-tuning. Circles: flagship gain
(SFT $-$ \BaryOT{}); stars: best-KD gain; open triangle: the
path-matching flagship variant of Eq.~\eqref{eq:hubpath}. Vertical
differences at the same $\Gamma$ therefore isolate \emph{method}
effects (endpoint views $-0.15$; path mode a further $-0.23$);
horizontal movement isolates the \emph{teacher} effect. Wherever
$\Gamma\lesssim0.1$ no method beats SFT, and across the saturated
settings the flagship pays a $0.35$-$0.51$ tax; at $\Gamma=+1.49$ the best-KD star enters the
winning half-plane -- by JSKD ($+1.20$), not the transport flagship
($-0.46$). Hypothesis~\ref{hyp:crossover} brackets the crossover
between the last two settings.}
\label{fig:meta}
\end{figure}

\begin{figure}[t]
\centering
\includegraphics[width=\linewidth]{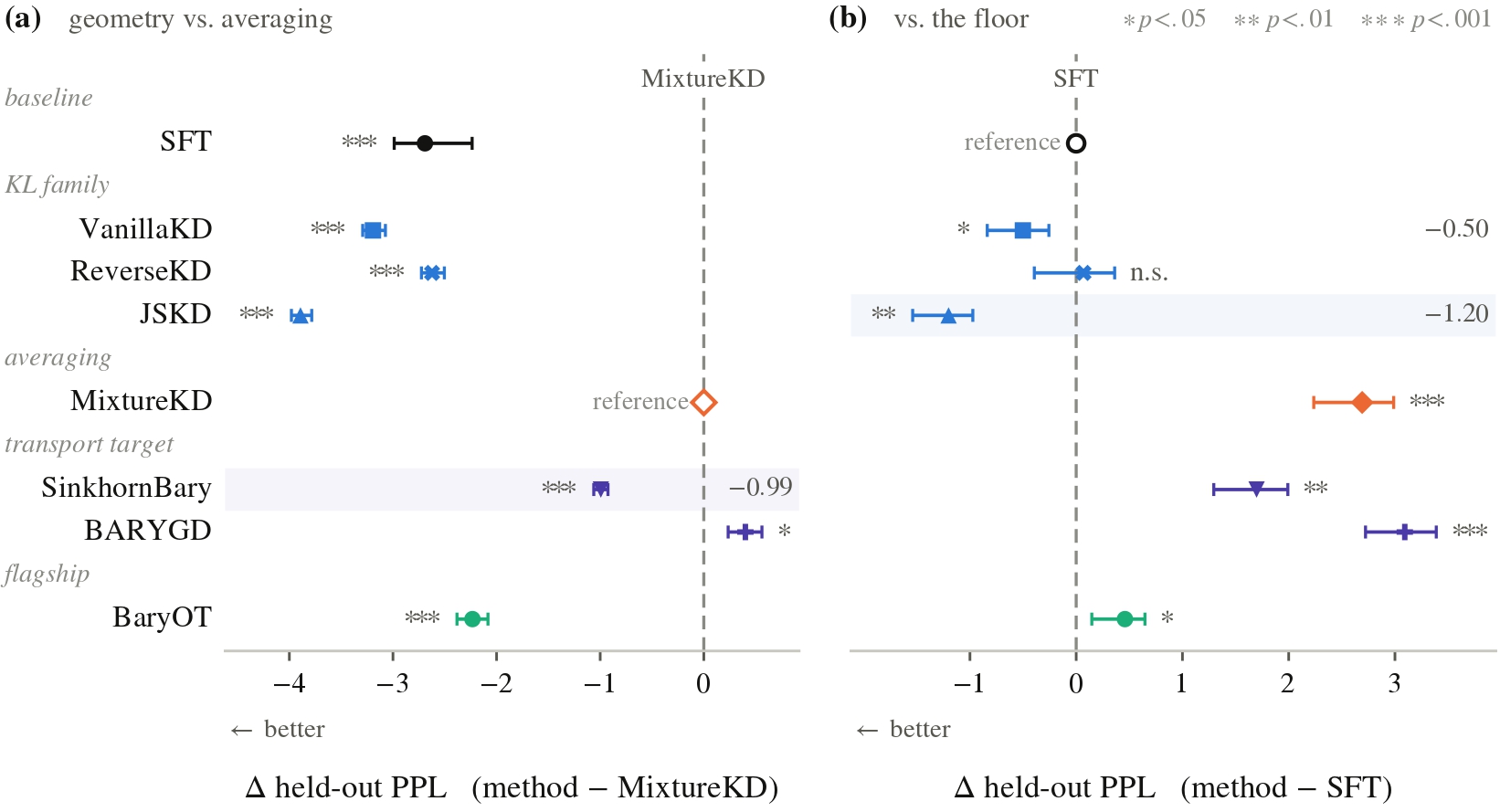}
\caption{\textbf{Real-ceiling run ($410\text{M}\to70\text{M}$): paired
differences with bootstrap 95\% CIs.} Right panel: JSKD ($-1.20$,
$p=.0023$) and VanillaKD ($-0.50$, $p=.043$) beat SFT -- the first
KD wins of the study -- while the averaging and transport families fall
behind. Left panel: the transport barycenter still beats the arithmetic
mixture by $-0.99$ ($p<10^{-4}$): the operator effect survives the
regime flip.}
\label{fig:forest}
\end{figure}

\begin{figure}[t]
\centering
\includegraphics[width=\linewidth]{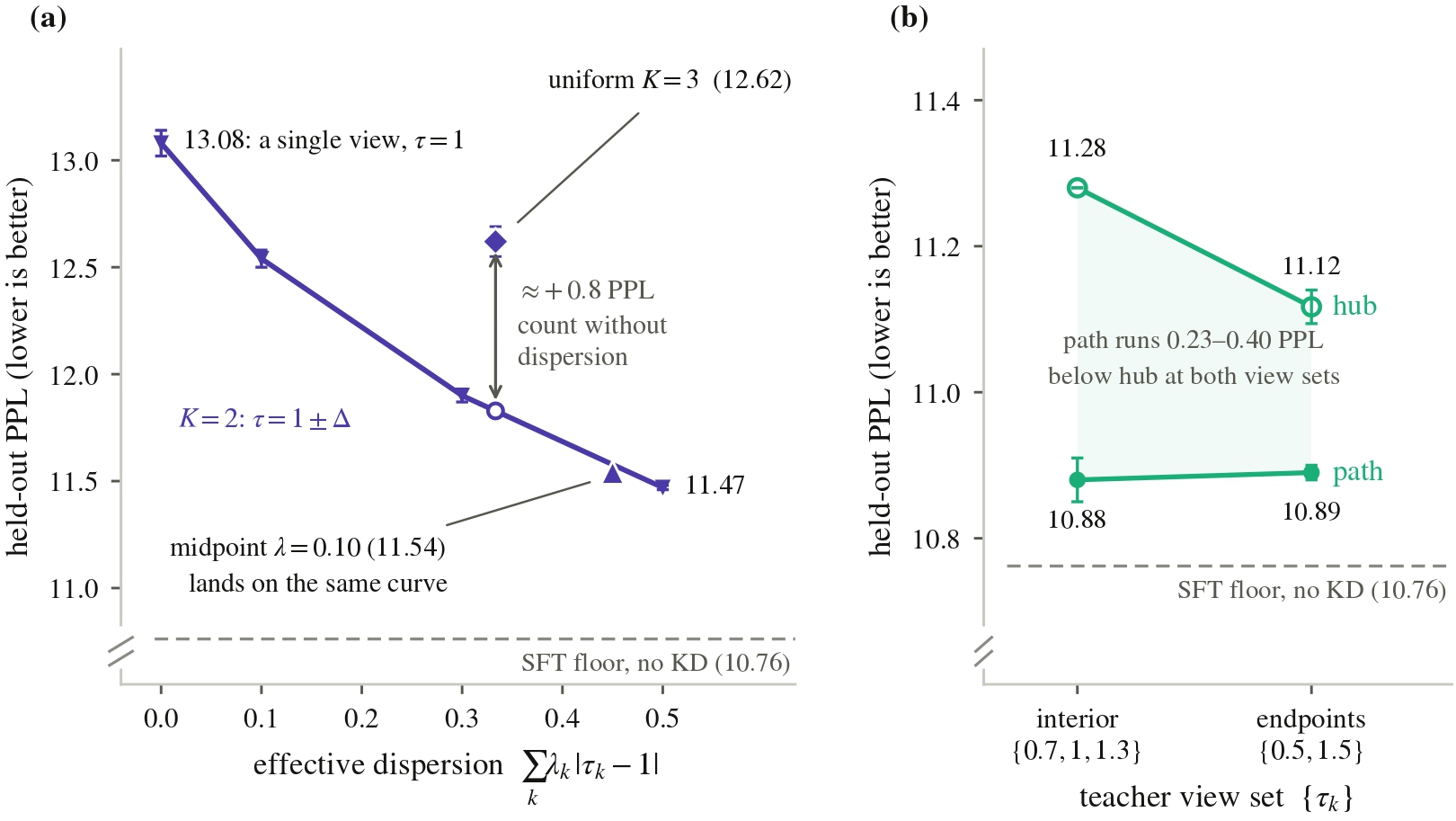}
\caption{\textbf{The dispersion law} (small pair, SinkhornBary target in
(a), \BaryOT{} in (b)). \textbf{(a)}~Held-out PPL falls monotonically as
the two views move apart symmetrically ($\tau=1\pm\Delta$; $\Delta{=}0$
is the single-view control), and the two $K{=}3$ probes obey the same
curve through their effective dispersion, not their count: the
midpoint-down-weighted probe lands on it, uniform $K{=}3$ sits
$\approx0.8$ PPL above. The open marker is the measured sweep
\emph{linearly interpolated} to the uniform probe's $\Deff=1/3$; the
quoted gap is that interpolation, not a separate run.
\textbf{(b)}~In the flagship, hub mode is view-sensitive while path mode
is flat and uniformly better -- the view-set effect in hub mode was a
softening artifact. Error bars are seed s.d.; $n{=}3$ for the sweep,
$n{=}2$ for the probes and flagship variants, $n{=}5$ for
endpoints-hub. Both panels share the SFT floor and neither $y$ axis
starts at zero.}
\label{fig:dispersion}
\end{figure}

\begin{figure}[t]
\centering
\includegraphics[width=\linewidth]{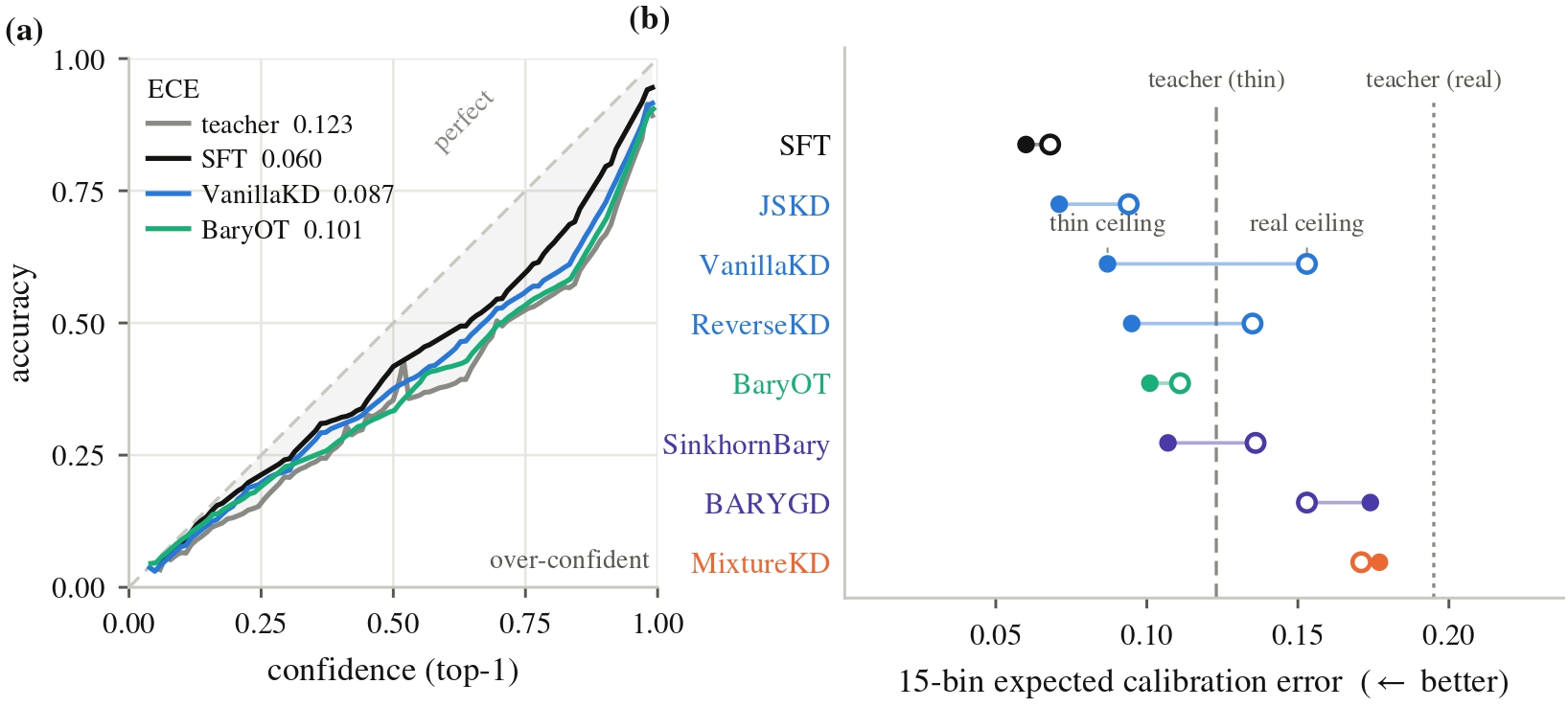}
\caption{\textbf{Calibration.} \textbf{(a)}~Next-token top-1 reliability
(small pair): every curve runs under the diagonal, so every model is
over-confident, and SFT is the closest to it. \textbf{(b)}~15-bin ECE
for all eight students in \emph{both} regimes, against each regime's
teacher (filled = thin ceiling, hollow = real ceiling). The averaging
family is the only one that fails to beat the teacher; \BaryOT{} beats
it in both regimes; and the ordering is essentially unchanged by the
regime flip, unlike the PPL ranking of Table~\ref{tab:main}.}
\label{fig:calibration}
\end{figure}

\subsection{Findings}
\label{sec:findings}

\textbf{F1  --  The ceiling problem is real, diagnosable, and was
capacity.} On the small pair the pretrained teacher-student gap is
$+384$ PPL, yet after identical 3k-example fine-tuning
$\Gamma=+0.10$: the task saturates, and \emph{no} KD method can beat
SFT there (Table~\ref{tab:main}, left). The capacity step
($410\text{M}\to70\text{M}$) preserves $\Gamma=+1.49$ after
fine-tuning. We suggest reporting a guard/probe pair of this kind as
standard practice in KD studies: without it, ``KD fails'' and ``there
was nothing to distill'' are indistinguishable.

\textbf{F2  --  The two regimes and the inversion (central result).} At
$\Gamma=+1.49$ KD beats SFT for the first time in the study:
JSKD reaches $9.888\pm.027$ -- within $0.29$ of the $5.8\times$ larger
teacher ($d=-3.1$, $p=.0023$) -- and VanillaKD $10.588$
($p=.043$), while the transport/averaging family drops to the bottom.
The ranking is the mirror image of the thin-ceiling one
(Fig.~\ref{fig:forest}, Table~\ref{tab:main}).

\textbf{F3  --  The fidelity-generalization correlation flips sign with
$\Gamma$.} At $\Gamma\approx0$, the most faithful method (VanillaKD,
predKL $0.984$) is \emph{out-generalized} by the less faithful JSKD
(predKL $1.018$, PPL $11.535$ vs.\ $11.923$) -- the
inversion of \citet{stanton2021does}. At $\Gamma=1.49$ the most
faithful methods (VanillaKD/JSKD, predKL $1.43$) are exactly the best
generalizers. Fidelity helps iff there is something to be faithful
\emph{to}.

\textbf{F4  --  The dispersion law (target pathway), with an informative
deviation.} PPL is monotone in $\Delta$ ($13.08\to11.47$,
Fig.~\ref{fig:dispersion}a); an ablation forecast of the endpoint
configuration, made in the second saturated replication ($11.44$), was
realized in the third replication's main sweep to within $0.02$
(SinkhornBary $12.78\to11.424$). Down-weighting the midpoint of
$\{0.5,1,1.5\}$ to $\lambda=(.45,.10,.45)$ lands on the
$\Deff$ curve, but \emph{uniform} $K{=}3$ sits $\approx0.8$ PPL above it:
interior views harm \emph{superlinearly} in their weight, favoring the
constraint (endpoint-binding) reading of \eqref{eq:mmsb} over pure
dilution.

\textbf{F5  --  Path beats hub and erases view sensitivity (flagship
pathway).} Path mode: $10.88/10.89$ (interior/endpoints) vs.\ hub
$11.28/11.12$ (Table~\ref{tab:abl}). The hub-mode endpoint advantage
was therefore largely the fixed-$\tau_{\mathrm{KD}}$ softening
asymmetry of \eqref{eq:hubpath}; path-matching is simply the better
objective and brings the flagship within $0.12$ PPL of SFT at a
$0.10$-thick ceiling.

\textbf{F6  --  Dispersed views unlock the operator.} The
barycenter-mixture distance grows with dispersion
($\lVert b^\star-\bar\mu\rVert_1$: $0.027$ interior $\to$ $0.424$
endpoints, small pair; $0.191$ big pair), and exactly there
SinkhornBary beats MixtureKD decisively in \emph{both} regimes
($-1.30$ and $-0.99$, $p<10^{-4}$). With clustered views the two
aggregates coincide and the operator is second-order -- resolving an
apparent contradiction in our own earlier runs.

\textbf{F7  --  The KD tax is two-sided and method-dependent.} At thin
ceilings every KD pays a tax over SFT; the flagship's shrank with
better objectives at fixed $\alpha$
($0.51\to0.36\to0.12$: views, then path), so it is not a fixed
price of the KD budget. At the real ceiling the ``tax'' turns negative for the
forward-KL and JS members (ReverseKD still pays $+0.07$, n.s.). Fig.~\ref{fig:meta} shows both series across all six runs.

\textbf{F8  --  Stable side-channels.} Calibration ordering is
regime-stable (SFT best; \BaryOT{} beats the teacher's ECE in all
runs; averaging worst; Fig.~\ref{fig:calibration}). The KL-direction
ordering $\JS<$ forward $<$ reverse holds in both regimes, with
off-policy reverse KL the only KL member that never beats
SFT -- consistent with the on-policy analyses of
\citet{gu2024minillm,agarwal2024gkd}. Debiasing \eqref{eq:sdiv} is a
measurable no-op in this regime ($11.13$ vs.\ $11.14$; four runs), as
the transport term dominates at $\eps=0.05$ on a bounded cost.
The two saturated-task replications agree to within $0.03$ PPL on
every method.

\section{Discussion and limitations}
\label{sec:discussion}

\textbf{Interpretation.} The transport objective's defining property is
\emph{tolerance}: under \eqref{eq:cost} a near-miss is cheap. When the
teacher barely exceeds the student (Regime~I), that tolerance is
protective -- the student is not dragged toward teacher noise, and the
gentle pull yields the smallest tax, best-in-family PPL, and
better-than-teacher calibration. When the teacher is genuinely stronger
(Regime~II), the same tolerance discards usable gradient signal exactly
where the near-misses carry information, and faithful pointwise
matching wins. This resolves the apparent conflict between our Regime-I
results and the classical KD successes, and refines the fidelity
paradox of \citet{stanton2021does} into a conditional statement.

\textbf{Limitations.} Regime~II rests on one model pair, one dataset
and 5 seeds (its SFT variance, $\pm0.357$, is the largest in the
study); the big-pair ablations (dispersion sweep in path mode,
\BaryOT{}-path) did not finish and are the single most valuable
missing cells; the crossover of Hypothesis~\ref{hyp:crossover} is
bracketed by two points only; the MMSB reading is a surrogate -- none of
its statements are used as theorems; models are small
($31\text{M}$-$410\text{M}$) and the data regime is 3k examples; and
the dispersion law is established for the barycenter-target pathway at
one $\eps$, while in path mode the flagship is view-insensitive,
limiting the law's practical scope to target-side aggregation. Compute
was a single T4 GPU per run.

\textbf{Future work.} (i)~A $\Gamma$-sweep at fixed architecture
(intermediate Pythia sizes, or varying teacher fine-tuning) to locate
$\Gamma^\star$; (ii)~\BaryOT{}-path at $\Gamma=1.49$, which
decides whether the two regimes are about objectives (hub/target) or
about the OT coupling itself; (iii)~a hybrid
$\mathbf D=\JS+\gamma S_\eps$ in Regime~II -- does geometric
regularization help the winner (prediction: small $\gamma$ improves ECE
at equal PPL); (iv)~on-policy variants \citep{agarwal2024gkd} of the
reverse direction.

\section{Conclusion}
\label{sec:conclusion}

We formalized multi-temperature, transport-geometric distillation from
first principles, proved which aggregation operators can and cannot
create multi-view benefits, and subjected the framework to a controlled
eight-method study with explicit teacher-strength diagnostics. The
framework's ingredients each earned a precise scope: endpoint views for
target-side aggregation (a monotone dispersion law), path-matching for
the flagship, and the transport objective as the robust choice when the
teacher has little to teach. The overarching lesson is that ``which
distillation loss is best'' is not a property of the loss: it is a
function of the ceiling gap $\Gamma$, with a bracketed crossover that
we state as a falsifiable hypothesis.

\bibliography{references}

\end{document}